\newtheorem{definition}{Definition}
\newtheorem{example}{Example}
\newtheorem{proposition}{Proposition}
\newtheorem{theorem}{Theorem}
\title{How Does Knowledge of the AUC Constrain the Set of Possible Ground-truth Labelings?}
\author{
  Jacob Whitehill\\
  Department of Computer Science\\
  Worcester Polytechnic Institute\\
  Worcester, MA 01609 \\
  \texttt{jrwhitehill@wpi.edu} \\
}
\begin{document}

\maketitle

\begin{abstract}
Recent work on privacy-preserving machine learning has considered how datamining
competitions such as Kaggle could potentially be ``hacked'', either intentionally or inadvertently,
by using information from an oracle that reports a classifier's accuracy on the test set
\citep{blum2015ladder,hardt2014preventing,zheng2015toward,Whitehill2016}.
For binary classification tasks in particular,
one of the most common accuracy metrics is the Area Under the ROC Curve (AUC), and in this paper we explore
the mathematical structure of how the AUC is computed from an $n$-vector of real-valued ``guesses''  with respect to
the ground-truth labels. We show how knowledge 
of a classifier's AUC on the test set can constrain the set of possible
ground-truth labelings, and we derive an algorithm both to 
compute the exact number of such labelings and to enumerate
efficiently over them. Finally, we provide empirical evidence that, surprisingly,
the number of compatible labelings can actually \emph{decrease} as $n$ grows, until a test set-dependent threshold is reached.
\end{abstract}

\section{Introduction and Related Work}
Datamining contests such as Kaggle and KDDCup can accelerate progress in many application domains by
providing standardized datasets and a fair basis of comparing multiple algorithmic approaches.
However, their utility will diminish if the integrity of leaderboard rankings is called
into question due to either intentional or accidental overfitting to the test data.
Recent research on privacy-preserving machine learning \citep{Whitehill2016,blum2015ladder,zheng2015toward}
has shown how information on the accuracy of a contestant's guesses, returned to the 
contestant by an oracle, can divulge information
about the test data's true labels. Such oracles are often provided by the organizers of the competition themselves.
For example, in the 2017 Intel \& MobileODT Cervical Cancer Screening  competition\footnote{
\url{https://www.kaggle.com/c/intel-mobileodt-cervical-cancer-screening}}
hosted by Kaggle, every contestant can submit her/his guesses up to 5 times per day, and for each submission the oracle returns the
log-loss of the guesses with respect to the ground-truth values of the entire $512$-element test set. The contestant
can use the accuracy information to improve (hopefully) the classifier design and then re-submit.

{\bf AUC}: For binary classification problems, one of the most commonly used accuracy metrics is the
{\bf A}rea Under the {\bf R}eceiver Operating Characteristics {\bf C}urve (AUC). In contrast
to other accuracy metrics such as log-loss and 0/1 loss, which can be computed as the sum of example-wise
losses over each example in the test set, the AUC statistic is computed over all possible \emph{pairs} of test examples,
such that 
each pair contains one example from each class. In a recent paper, \cite{Whitehill2016} 
showed that an oracle that provides contestants with information on the AUC of their guesses can inadvertently
divulge information on the ground-truth labels of the test examples.
As a concrete example, suppose that a tiny test set contains just 4 examples; a
contestant's real-valued guesses for these labels is $\hat{\bf y} = (0.2, 0.5, 0.9, 0.1)$;
and an oracle informs the contestant that her/his guesses have achieved an AUC of exactly
$0.75=3/4$. How does this information constrain the set of possible binary ground-truth vectors for the test set?
In this example, it turns out that there is \emph{exactly one} possible ground-truth vector -- namely
${\bf y} = (1,0,1,0)$ -- for which
the AUC of the contestant's guesses is exactly $0.75$. Hence, based on a single oracle query, the contestant has
managed to deduce the test labels with complete certainty.
This simple example raises more general questions: For a test set with $n$ examples and a fixed AUC
of $c=p/q$ (where $p,q\in \mathbb{Z}$), how many compatible binary ground-truth vectors are there? Does this number grow monotonically
in $n$, or might there exist some ``pathological'' combinations of the number of test examples $n$,
number of positively labeled  examples $n_1$, and the contestant's AUC $c$, such that this number is small?
If the number is small, can the solution candidates be enumerated efficiently?
This paper explores these questions in some detail.

{\bf Related work}:
Over the past few years there has been growing theoretical and practical interest in the statistical validity
of scientific results that are obtained from \emph{adaptive} data analyses, in which the results of one experiment
inform the design of the next \citep{dwork2015preserving,hardt2014preventing}. For the particular application of
datamining contests -- in which contestants can submit their guesses to an oracle, receive information on their accuracy, revise
their guesses, and resubmit --
a potential danger is that the rankings and associated accuracy statistics of different contestants
may be unreliable. Therefore, the design of algorithms to generate contest leaderboards that are robust to ``hacking'',
whether intentional as part of an attack or
inadvertently due to adaptive overfitting, has begun to generate significant research interest \citep{blum2015ladder,zheng2015toward}.
In particular,
\cite{blum2015ladder} proposed an algorithm (``Ladder'')
that can reliably estimate the accuracy of a contestant's classifier on the true test data distribution,
even when the classifier has been adaptively optimized based on the output of an oracle on the empirical test distribution.

While the availability of an oracle in datamining contests presents potential problems, it is also
useful for helping contestants to focus their
efforts on more promising algorithmic approaches. Our research is thus related to privacy-preserving machine learning
and differential privacy (e.g., \cite{Dwork2011,ChaudhuriMonteleoni2009,BlumEtAl2013}), which are concerned with how to provide
useful aggregate statistics without disclosing private information about particular examples in the dataset.
The AUC statistic, in particular, has been investigated in the context of privacy:
\cite{StoddardEtAl2014} proposed an algorithm for computing ``private ROC''
curves and associated AUC statistics. \cite{Matthews2013} showed how an attacker who already knows most of the test labels can
estimate the remaining labels if he/she gains access to an empirical ROC curve, i.e., 
a set of classifier thresholds and corresponding true positive and false positive rates.

The prior work most similar to ours is by \cite{Whitehill2016}. They
showed a weak form of lower bound on the number of possible
binary ground-truth vectors ${\bf y} \in \{0,1\}^n$ for which the contestant's guesses $\hat{\bf y}$ achieve
any fixed AUC $c$. Specifically, for every AUC value $c=p/q\in (0,1)$, there exists an infinite sequence  of dataset
sizes ($n = 4q, 8q, 12q, \ldots$) such that the number of satisfying ground-truth vectors ${\bf y}\in \{0,1\}^n$ grows exponentially in
$n$. However, this result does not preclude the possibility that there might be certain pathological cases -- 
combinations of $p$, $q$, $n_0$, and $n_1$ -- for which the number of satisfying ground-truth vectors is actually much smaller.
Conceivably, there might be values of $n$ that lie between integer multiples of $4q$ for which the number of satisfying solutions is small.
Moreover, the lower bound in \cite{Whitehill2016} applies only to datasets that contain at least $4q$ examples and says nothing
about smaller (but possibly still substantial) datasets.

{\bf Contributions}: The novel contributions of our paper are the following:
(1) We derive an algorithm to compute the exact number of
$n$-dimensional binary ground-truth vectors
for which a contestant's real-valued vector of guesses achieves a fixed AUC, along
with an algorithm to efficiently generate all such vectors.  (2) We show that 
the number of distinct binary ground-truth vectors, in which
$n_1$ entries are $1$, and for which  a contestant's guesses achieve a fixed AUC,
is equal to the number of elements in a \emph{truncated} $n_1$-dimensional discrete simplex 
(i.e., a subset of $\Delta^{n_1}_{d}$). (3) We provide empirical evidence that the number of satisfying binary ground-truth vectors
can actually \emph{decrease} with increasing $n$, until a test set-dependent threshold is reached.

\section{Notation and Assumptions}
Let ${\bf y} = (y_1,\ldots,y_n) \in \{ 0,1 \}^n$ be the ground-truth binary labels
of $n$ test examples, and let $\hat{\bf y} = (\hat{y}_1,\ldots,\hat{y}_n) \in \mathbb{R}^n$ be the
contestant's real-valued guesses. Let $\mathcal{L}_1({\bf y})=\{i: y_i=1\}$
and $\mathcal{L}_0({\bf y})=\{i: y_i=0\}$ represent the index sets of the examples that are labeled $1$ and $0$, respectively.
Similarly define $n_1({\bf y})=|\mathcal{L}_1({\bf y})|$ and $n_0({\bf y})=|\mathcal{L}_0({\bf y})|$ to be the number of examples labeled $1$ and $0$ in ${\bf y}$, respectively. For brevity, we sometimes write simply $n_1$, $n_0$, $\mathcal{L}_0$, or $\mathcal{L}_1$ if the argument to these functions is
clear from the context.

We assume that the contestant's guesses 
$\hat{y}_1,\ldots,\hat{y}_n$ are all \emph{distinct} (i.e., $\hat{y}_i=\hat{y}_j \iff i=j$).
In machine learning applications where classifiers analyze high-dimensional, real-valued feature vectors, this is common.

Importantly, but without loss of generality, we assume that the test examples are ordered according to
$\hat{y}_1,\ldots,\hat{y}_n$, i.e., $\hat{y}_i > \hat{y}_j \iff i > j$. This significantly simplifies the notation.

Finally, we assume that the oracle provides the contestant with \emph{perfect} knowledge of the AUC $c=p/q$, where $p/q$ is a reduced
fraction (i.e.,  the greatest common factor of $p$ and $q$ 1) on the \emph{entire} test set,
and that the contestant knows both $p$ and $q$. 

\section{AUC Accuracy Metric}
The AUC has two mathematically equivalent definitions \citep{Tyler2000,AgarwalEtAl2005}: (1)
the AUC is the Area under the Receiver Operating Characteristics (ROC) curve, which plots
the true positive rate against the false positive rate of a classifier  on some test set.
The ROC thus characterizes the performance of the classifier over all possible thresholds on its real-valued output, and the
AUC is the integral of the ROC over all possible false positive rates in the interval $[0,1]$. (2)
The AUC represents the fraction of \emph{pairs} of test examples -- one labeled $1$ and one labeled $0$ -- in which the classifier
can correctly identify the positively labeled example based on the classifier output.  Specifically, since we assume
that all of the contestant's guesses are distinct, then the AUC can be computed as:
\begin{equation}
\label{eqn:auc}
\textrm{AUC}({\bf y}, \hat{\bf y}) = \frac{1}{n_0 n_1} \sum_{i\in\mathcal{L}_0} \sum_{j\in\mathcal{L}_1} \mathbb{I}[\hat{y}_i < \hat{y}_j]
\end{equation}
Equivalently, we can define the AUC in terms of the number of \emph{misclassified pairs} $h$:
\[
\textrm{AUC}({\bf y}, \hat{\bf y}) = 1 - \frac{h({\bf y}, \hat{\bf y})}{n_0 n_1}
\]
where
\[
h({\bf y},\hat{\bf y}) = \sum_{i\in\mathcal{L}_0} \sum_{j\in\mathcal{L}_1} \mathbb{I}[\hat{y}_i > \hat{y}_j]
\]
As is evident in Eq.~\ref{eqn:auc}, all that matters
to the AUC is the \emph{relative ordering} of the $\hat{y}_i$, not their exact values. 
Also, if all examples belong to the same class and either $n_1=0$ or $n_0=0$, then the AUC is undefined.
Finally, the AUC is a \emph{rational} number because it can be written as the fraction of two integers $p$ and $q$,
where $q$ must divide $n_0 n_1$.

Since we assume (without loss of generality) that the contestant's guesses are ordered such that $\hat{y}_i < \hat{y}_j \iff i <j $, then
we can simplify the definition of $h$  to be:
\begin{equation}
\label{eqn:h_simple}
h({\bf y},\hat{\bf y}) = \sum_{i\in\mathcal{L}_0} \sum_{j\in\mathcal{L}_1} \mathbb{I}[i>j]
\end{equation}

\section{Computing the Exact Number of Binary Labelings for which AUC=$c$}
In this paper, we are interested in determining the number of unique binary vectors ${\bf y} \in \{0,1\}^n$ such that
the contestant's guesses $\hat{\bf y} \in \mathbb{R}^n$ achieve a fixed AUC of $c$. The bulk of the effort is to
derive a recursive formula for the number of unique binary vectors with a \emph{fixed} number $n_1$  of $1$s that give the
desired AUC value.

{\bf Intuition}: Given a real-valued vector $\hat{\bf y}$ representing
the contestant's guesses and a corresponding binary vector ${\bf y}$ representing the ground-truth test labels, the number $h({\bf y}, \hat{\bf y})$
of misclassified
pairs of examples (such that each pair contains one example from each class) can be increased by $1$ by
``left-swapping'' any occurrence of $1$ in ${\bf y}$ (at index $j'$) with a $0$ that occurs immediately to the left  of it
(i.e., at index $j'-1$) -- see Figure \ref{fig:leftswap}.
To generate a vector ${\bf y}$ such that $h({\bf y}, \hat{\bf y})=d$ for any desired $d\in\{0,\ldots,q\}$,
we start with a vector ${\bf r}$
in ``right-most configuration'' -- i.e., where all the $0$s occur to the left of all the $1$s -- because (as we will show)
$h({\bf r}, \hat{\bf y})=0$. We then apply a sequence of multiple left-swaps to each of the $1$s in ${\bf r}$, and count
the number of ways of doing so such that the total number is $d$.
Because we want to determine the number of \emph{unique} vectors ${\bf y}$ such that 
$h({\bf y}, \hat{\bf y})=d$, we restrict the numbers $s_1,\ldots,s_{n_1}$ of left-swaps applied to the $n_1$ different $1$s in ${\bf r}$ so that
$s_i\geq s_j$ for all $i<j$. This results in a proof that the number of possible ground-truth binary labelings, for any given
value of $n_1$ and for which a given vector of guesses misclassifies $d$ pairs of examples, is equal to the number of points in a
$n_1$-dimensional discrete simplex $\Delta^{n_1}_{d}$ that has been truncated by the additional
constraint that $n_0 \geq s_1\geq \ldots \geq s_{n_1}$.

To get started, we first define ``left-swap'' and ``right-most configuration'' more precisely:
\begin{definition}
\label{def:sigma}
For any ${\bf y} \in \{0,1\}^n$ and $i\in \{2,\ldots,n\}$ where $y_i=1$ and $y_{i-1}=0$,
define the (partial) function $\sigma: \{0,1\}^n \times \mathbb{Z}^+ \rightarrow \{0,1\}^n$ such that
$\sigma({\bf y},i)=(y_1,\ldots,y_i,y_{i-1},y_{i+1},\ldots,y_n)$.
Function $\sigma$ is said to perform a \textbf{left-swap on ${\bf y}$ from index $i$}.
\end{definition}

\begin{definition}
\label{def:rho}
For any ${\bf y} \in \{0,1\}^n$, $i\in \{2,\ldots,n\}$, and $k < i$ where $y_i=1$ and $y_{i-1}=\ldots=y_{i-k}=0$,
define the (partial) function $\rho: \{0,1\}^n \times \mathbb{Z}^+ \times \mathbb{Z}^+ \rightarrow \{0,1\}^n$,
where
\[
\rho({\bf y},i,k)=\left\{ \begin{array}{rl}\sigma({\bf y}, i) & \textrm{for $k=1$} \\
                                   \underbrace{\sigma(\ldots ( \sigma(\sigma}_{\textrm{$k$}}({\bf y}, i), i-1), \ldots), i-(k-1)) & \textrm{for $k>1$}
                  \end{array} \right.
\]
Function $\rho$ is said to perform \textbf{$k$ consecutive left-swaps on ${\bf y}$ from index $i$}.
\end{definition}
\begin{example}
Let ${\bf y}=(y_1, y_2, y_3, y_4, y_5) \in \{0,1\}^5$. Then $\sigma({\bf y}, 4) = (y_1, y_2, y_4, y_3, y_5)$ and 
$\rho({\bf y}, 4, 3) = (y_4, y_1, y_2, y_3, y_5)$.
\end{example}

\begin{definition}
Let ${\bf y}$ be a vector of $n$ binary labels such that $n_1$ entries are $1$. Let
$\mathcal{L}_1({\bf y})=\{p_1,\ldots,p_{n_1}\}$, ordered such that $p_i<p_j \iff i<j$,
be the indices of the $1$s in the vector. Then we say ${\bf y}$
is in a \textbf{right-most configuration} iff $p_i = n - n_1 + i$ for every $i \in \{ 1, \ldots, n_1 \}$.
\end{definition}

\begin{proposition}
\label{prop:h_zero}
Let ${\bf r}=(r_1,\ldots,r_n)$ be a binary vector of length $n$ in right-most configuration such that $n_1$ entries are $1$. 
Let $\hat{\bf y}=(\hat{y}_1,\ldots,\hat{y}_n)$ be a vector of $n$ real-valued guesses, ordered such that
$\hat{y}_i<\hat{y}_j \iff i<j$. Then $h({\bf r}, \hat{\bf y})=0$.
\end{proposition}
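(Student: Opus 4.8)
The plan is to unpack the definition of right-most configuration to pin down the index sets $\mathcal{L}_0(\mathbf{r})$ and $\mathcal{L}_1(\mathbf{r})$ exactly, and then to show that every summand in the simplified misclassification count of Eq.~\ref{eqn:h_simple} is zero. The whole argument is really just careful bookkeeping.

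First I would observe that, since $\mathbf{r}$ is in right-most configuration with $n_1$ ones, its $1$-indices are $p_i = n - n_1 + i$ for $i \in \{1, \ldots, n_1\}$; writing $n_0 = n - n_1$, this says exactly that $\mathcal{L}_1(\mathbf{r}) = \{n_0 + 1, \ldots, n\}$, and therefore $\mathcal{L}_0(\mathbf{r}) = \{1, \ldots, n_0\}$. The crucial structural fact is that every $0$-index lies strictly to the left of every $1$-index: for any $i \in \mathcal{L}_0(\mathbf{r})$ and $j \in \mathcal{L}_1(\mathbf{r})$ we have $i \le n_0 < n_0 + 1 \le j$, so $i < j$.

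Next I would apply the index-based formula for $h$ from Eq.~\ref{eqn:h_simple}, which is legitimate precisely because $\hat{\mathbf{y}}$ is ordered so that $\hat{y}_i < \hat{y}_j \iff i < j$. Each term of the double sum is $\mathbb{I}[i > j]$ with $i \in \mathcal{L}_0(\mathbf{r})$ and $j \in \mathcal{L}_1(\mathbf{r})$; by the previous step $i < j$, so $\mathbb{I}[i > j] = 0$ for every such pair, and hence $h(\mathbf{r}, \hat{\mathbf{y}}) = 0$. There is no real obstacle here: the only point requiring any care is to invoke the ordering assumption on $\hat{\mathbf{y}}$ so that the index-based form of $h$ (rather than the value-based form) may be used, after which the conclusion $\max \mathcal{L}_0 < \min \mathcal{L}_1$ makes the indicator vanish identically.
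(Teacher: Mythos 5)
Your proof is correct and follows the same approach as the paper's, which simply asserts that the double sum in Eq.~\ref{eqn:h_simple} vanishes because $\mathbf{r}$ is in right-most configuration. You merely spell out the bookkeeping the paper leaves implicit -- identifying $\mathcal{L}_0(\mathbf{r})=\{1,\ldots,n_0\}$ and $\mathcal{L}_1(\mathbf{r})=\{n_0+1,\ldots,n\}$ so that $i<j$ for every pair, making each indicator zero -- which is a welcome level of rigor but not a different argument.
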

\begin{proof}
Since ${\bf r}$ is in right-most configuration, it is clear that the right-hand side of
\[
h({\bf r}, \hat{\bf y}) = \sum_{i\in\mathcal{L}_0({\bf r})} \sum_{j\in\mathcal{L}_1({\bf r})} \mathbb{I}[i>j]
\]
sums to $0$.
\end{proof}

\begin{figure}
\begin{center}
\includegraphics[width=5in]{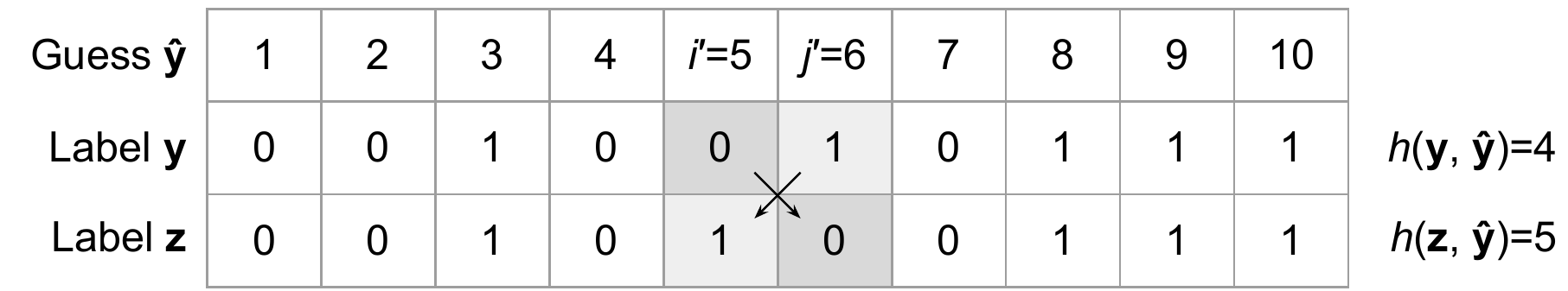}
\end{center}
\caption{Illustration of how performing a left-swap on binary vector ${\bf y}$ at index $j'$ yields a new vector ${\bf z}$
such that the number of misclassified pairs $h({\bf z}, \hat{\bf y})$ is one more than  $h({\bf y}, \hat{\bf y})$.
Specifically, $\hat{\bf y}$ misclassifies pairs $(3,4)$, $(3,5)$, $(3,7)$, and $(6,7)$ w.r.t.~to ${\bf y}$,
since for each such pair $(i,j)$, $\hat{y}_i<\hat{y}_j$ but $y_i>y_j$. In contrast,
$\hat{\bf y}$ misclassifies $(3,4)$, $(3,6)$, $(3,7)$, $(5,6)$, and $(5,7)$ w.r.t.~to ${\bf z}$.}
\label{fig:leftswap}
\end{figure}

\begin{proposition}
\label{prop:h_one_more}
Let ${\bf y}$ be a vector of $n$ binary labels, and let ${\bf z} = \sigma({\bf y}, j')$
be another vector of binary labels that is produced by a single left-swap of ${\bf y}$ at index $j'\in \{2,\ldots,n\}$, where
$y_{j'}=1$ and $y_{i'}=0$, and $i'=j'-1$.
Let $\hat{\bf y}$ be a vector of real-valued guesses.
Then the number of pairs misclassified by $\hat{\bf y}$ w.r.t.~${\bf z}$ is one more than the
number of pairs misclassified by $\hat{\bf y}$ w.r.t.~${\bf y}$ -- i.e.,
$h({\bf z}, \hat{\bf y}) = h({\bf y}, \hat{\bf y}) + 1$.
\end{proposition}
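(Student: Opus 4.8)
The plan is to prove the claim by computing the difference $h({\bf z},\hat{\bf y}) - h({\bf y},\hat{\bf y})$ directly from the simplified definition in Eq.~\ref{eqn:h_simple} and showing it equals exactly $1$. The starting observation is that ${\bf z} = \sigma({\bf y},j')$ differs from ${\bf y}$ only at the two \emph{adjacent} indices $i'=j'-1$ and $j'$, where the pair of labels $(0,1)$ in ${\bf y}$ is exchanged for $(1,0)$ in ${\bf z}$. Every other coordinate, and hence the class membership of every index outside $\{i',j'\}$, is left unchanged. Because $h$ is a sum of indicator terms $\mathbb{I}[i>j]$ over pairs with $i\in\mathcal{L}_0$ and $j\in\mathcal{L}_1$, a term can contribute differently to $h({\bf z},\hat{\bf y})$ and $h({\bf y},\hat{\bf y})$ only if at least one of its two indices lies in $\{i',j'\}$; all pairs disjoint from $\{i',j'\}$ cancel immediately.

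First I would isolate the single pair $(j',i')$, in which both swapped indices appear. In ${\bf y}$ this pair is not counted (since $j'\in\mathcal{L}_1$), whereas in ${\bf z}$ we have $j'\in\mathcal{L}_0$ and $i'\in\mathcal{L}_1$ with $j'>i'$, so it \emph{is} counted; this pair alone accounts for the $+1$. It then remains to show that every pair coupling exactly one of $\{i',j'\}$ to a third index $k$ contributes the same total to ${\bf y}$ and to ${\bf z}$, so that these mixed terms net to zero. I would group such pairs by whether $k<i'$ or $k>j'$ (the adjacency $j'=i'+1$ rules out any $k$ strictly between them) and check each case: when $k$ lies below, the contribution of the pairs $(k,\cdot)$ or $(\cdot,k)$ migrates from one swapped index to the other but keeps the same value, and symmetrically when $k$ lies above.

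The main obstacle is precisely this cancellation, and the key is to invoke the adjacency $j'=i'+1$ explicitly. Because no index sits strictly between $i'$ and $j'$, any third index $k$ is either strictly below or strictly above both of them, and the multiset of labels occupying positions $\{i',j'\}$ is $\{0,1\}$ both before and after the swap. Consequently the number of misclassified pairs formed between $k$ and the block $\{i',j'\}$ depends only on $k$'s label and on which side of the block it sits, not on the internal ordering of the $0$ and the $1$ within the block — so this count is identical for ${\bf y}$ and ${\bf z}$. (Equivalently, one may read $h$ as the number of $(1,0)$ inversions in the label string $y_1,\ldots,y_n$: an adjacent swap turning $01$ into $10$ creates exactly one new inversion and alters no inversion involving any other position.) Combining the surviving $+1$ from the pair $(j',i')$ with the vanishing of all mixed terms yields $h({\bf z},\hat{\bf y}) = h({\bf y},\hat{\bf y})+1$, matching the illustration in Figure~\ref{fig:leftswap}.
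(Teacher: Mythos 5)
Your proof is correct and follows essentially the same route as the paper's: both decompose the pairs into those disjoint from $\{i',j'\}$, those coupling exactly one of $i',j'$ to a third index, and the single pair $(i',j')$; both use the adjacency $j'=i'+1$ to show the mixed pairs contribute identically under ${\bf y}$ and ${\bf z}$; and both attribute the $+1$ to the pair of swapped indices. The paper carries this out via explicit index-set identities ($\tilde{\mathcal{L}}_0\setminus\{j'\}=\mathcal{L}_0\setminus\{i'\}$, etc.) and summation manipulations, but the decomposition and the key adjacency argument are the same.
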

\begin{proof}
To shorten the notation, let $\mathcal{L}_0 = \mathcal{L}_0({\bf y}), \mathcal{L}_1 = \mathcal{L}_1({\bf y})$, and let
$\tilde{\mathcal{L}}_0 = \mathcal{L}_0({{\bf z}}), \tilde{\mathcal{L}}_1 = \mathcal{L}_1({{\bf z}})$.
We can split the summation in Equation \ref{eqn:h_simple} into four sets of pairs (see Figure \ref{fig:leftswap}): 
(a) those involving neither $i'$ nor $j'$; (b) those involving $j'$ but not $i'$; (c) those involving $i'$ but not $j'$;
and (d) the single pair involving both $i'$ and $j'$. By grouping the pairs this way, we obtain:
\begin{eqnarray*}
\label{eqn:h_ytilde_split}
\lefteqn{h({\bf z},\hat{\bf y})} && \\
  && = \left(\sum_{\substack{i\in\\ \tilde{\mathcal{L}}_0\setminus\{j'\}}} \sum_{\substack{j\in\\ \tilde{\mathcal{L}}_1\setminus\{i'\}}} \mathbb{I}[i>j]\right) +
       \left(\sum_{\substack{i\in\\ \tilde{\mathcal{L}}_0\setminus\{j'\}}} \mathbb{I}[i>i']\right) +
       \left(\sum_{\substack{j\in\\ \tilde{\mathcal{L}}_1\setminus\{i'\}}} \mathbb{I}[j'>j]\right) +
       \mathbb{I}[j'>i']
\end{eqnarray*}
Notice that $\tilde{\mathcal{L}}_0 = (\mathcal{L}_0({\bf y}) \setminus \{ i' \}) \cup \{ j' \}$, and
hence $\mathcal{L}_0({\bf y})\setminus\{i'\} = \tilde{\mathcal{L}}_0({\bf z}) \setminus \{ j' \}$. Similarly,
$\tilde{\mathcal{L}}_1\setminus\{i'\} = \mathcal{L}_1\setminus\{j'\}$.
Then we have:
\[
h({\bf z},\hat{\bf y})
     = \sum_{\substack{i\in\\ \mathcal{L}_0\setminus\{i'\}}} \sum_{\substack{j\in\\ \mathcal{L}_1\setminus\{j'\}}} \mathbb{I}[i>j] +
       \sum_{\substack{i\in\\ \mathcal{L}_0\setminus\{i'\}}} \mathbb{I}[i>i'] +
       \sum_{\substack{j\in\\ \mathcal{L}_1\setminus\{j'\}}} \mathbb{I}[j'>j] \quad +
       \mathbb{I}[j'>i'] \\
\]
Since $i'+1=j'$, then there cannot exist any index $i \in \mathcal{L}_0\setminus\{i'\}$ whose value is ``between''
$i'$ and $j'$; in other words,
$i>i' \iff i>j'$ for every $i\in \mathcal{L}_0\setminus\{i'\}$. Similarly,
$j'>j \iff i'>j$ for every $j\in \mathcal{L}_1\setminus\{j'\}$. Hence:
\[
h({\bf z},\hat{\bf y})
     = \sum_{\substack{i\in\\ \mathcal{L}_0\setminus\{i'\}}} \sum_{\substack{j\in\\ \mathcal{L}_1\setminus\{j'\}}} \mathbb{I}[i>j] +
       \sum_{\substack{i\in\\ \mathcal{L}_0\setminus\{i'\}}} \mathbb{I}[i>j'] +
       \sum_{\substack{j\in\\ \mathcal{L}_1\setminus\{j'\}}} \mathbb{I}[i'>j] \quad +
       \mathbb{I}[j'>i'] \\
\]
Finally, since $\mathbb{I}[j'>i'] = 1$ and $\mathbb{I}[i'<j']=0$, then:
\begin{eqnarray*}
h({\bf z},\hat{\bf y})
     &=& \sum_{\substack{i\in\\ \mathcal{L}_0\setminus\{i'\}}} \sum_{\substack{j\in\\ \mathcal{L}_1\setminus\{j'\}}} \mathbb{I}[i>j] +
       \sum_{\substack{i\in\\ \mathcal{L}_0\setminus\{i'\}}} \mathbb{I}[i>j'] +
       \sum_{\substack{j\in\\ \mathcal{L}_1\setminus\{j'\}}} \mathbb{I}[i'>j] \quad +
       \mathbb{I}[i'>j'] + 1 \\
     &=& h({\bf y}, \hat{\bf y}) + 1
\end{eqnarray*}
\end{proof}

\begin{proposition}
\label{prop:Y_S_correspondence}
Suppose a dataset contains $n$ examples, of which $n_1$ are labeled $1$ and $n_0=n-n_1$ are labeled $0$.
Let $\mathcal{Y}_{n_1} = \{ {\bf y}\in\{0,1\}^n: \sum_i y_i = n_1 \}$, and
let $\mathcal{S}_{n_1} = \{ (s_1,\ldots,s_{n_1})\in \mathbb{Z}^{n_1}: n_0\geq s_1 \geq \ldots \geq s_{n_1} \geq 0 \}$.
Then $\mathcal{Y}_{n_1}$ and $\mathcal{S}_{n_1}$ are in 1-to-1 correspondence.
\end{proposition}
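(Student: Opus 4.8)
The plan is to exhibit an explicit bijection $\phi:\mathcal{Y}_{n_1}\to\mathcal{S}_{n_1}$, rather than merely to argue that the two sets have equal cardinality. The guiding idea, following the left-swap intuition, is that $s_i$ should record how far the $i$-th $1$ has been shifted to the left of its right-most position. Concretely, for ${\bf y}\in\mathcal{Y}_{n_1}$ write $\mathcal{L}_1({\bf y})=\{p_1,\ldots,p_{n_1}\}$ ordered so that $p_i<p_j \iff i<j$, and define $\phi({\bf y})=(s_1,\ldots,s_{n_1})$ by $s_i = n_0-(p_i-i)$. The quantity $p_i-i$ counts the number of $0$s lying to the left of the $i$-th $1$ (there are $p_i-1$ positions to its left, of which $i-1$ hold $1$s), so $s_i$ is precisely the number of $0$s it has yet to pass --- i.e., the number of left-swaps separating ${\bf y}$ from the right-most configuration along that coordinate.

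First I would verify that $\phi$ lands in $\mathcal{S}_{n_1}$. Since $1\le p_1$ and $p_{n_1}\le n$, the bounds $0\le p_i-i\le n_0$ give $n_0\ge s_i\ge 0$ for every $i$. For monotonicity, $p_{i+1}>p_i$ forces $p_{i+1}-(i+1)\ge p_i-i$, and subtracting from $n_0$ reverses the inequality to $s_i\ge s_{i+1}$; chaining these yields $n_0\ge s_1\ge\ldots\ge s_{n_1}\ge 0$, so indeed $\phi({\bf y})\in\mathcal{S}_{n_1}$.

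Next I would construct the inverse $\psi:\mathcal{S}_{n_1}\to\mathcal{Y}_{n_1}$ by inverting the affine relation: set $p_i = n_0+i-s_i$ and let $\psi(s_1,\ldots,s_{n_1})$ be the binary vector whose $1$s sit exactly at positions $p_1,\ldots,p_{n_1}$. The work here is to confirm $\psi$ is well-defined: $p_{i+1}-p_i = 1+(s_i-s_{i+1})\ge 1$ shows the positions strictly increase (so they are distinct and there really are $n_1$ of them), while $s_1\le n_0$ and $s_{n_1}\ge 0$ give $p_1\ge 1$ and $p_{n_1}\le n$, keeping every position in range. Because $\phi$ and $\psi$ are built from the mutually inverse relations $s_i = n_0-p_i+i$ and $p_i = n_0+i-s_i$, the compositions $\psi\circ\phi$ and $\phi\circ\psi$ are both the identity, which establishes the bijection.

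I expect the only real care to lie in the index bookkeeping: pinning down that $p_i-i$ is exactly the count of $0$s to the left of the $i$-th $1$, and keeping the direction of the inequality straight when passing between the $p$-coordinates and the $s$-coordinates (the order reverses because $s_i$ is $n_0$ minus an \emph{increasing} quantity). Everything else is routine affine algebra, and notably no induction and no appeal to Propositions~\ref{prop:h_zero}--\ref{prop:h_one_more} is needed for the correspondence itself --- those results are instead what will let us later read off the number of misclassified pairs as the total swap count $\sum_i s_i$, turning the count of labelings with $h({\bf y},\hat{\bf y})=d$ into the promised truncated-simplex enumeration.
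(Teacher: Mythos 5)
Your proof is correct and is essentially the same as the paper's: your map $s_i = n_0 - (p_i - i)$ is exactly the paper's $f_s$ with $s_i = n - n_1 + i - p_i$ (composed with the passage from ${\bf y}$ to its sorted $1$-positions), and your verification of the range constraints, the monotonicity chain, and the explicit inverse $p_i = n_0 + i - s_i$ mirrors the paper's injectivity/surjectivity argument step for step. The only cosmetic difference is that you package injectivity and surjectivity as a two-sided inverse, which the paper effectively constructs anyway in its ``onto'' step.
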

\begin{proof}
Every binary vector ${\bf y}\in \mathcal{Y}_{n_1}$ of length $n$, of which $n_1$ entries are $1$, can be described by a unique 
vector of integers in the set $\mathcal{P}_{n_1}=\{(p_1,\ldots,p_{n_1})\in \mathbb{Z}^+: 1\leq p_1<\ldots<p_{n_1} \leq n\}$
specifying the indices of the $1$s in ${\bf y}$ in increasing order. In particular,
$\mathcal{Y}_{n_1}$ and $\mathcal{P}_{n_1}$ are in 1-to-1 correspondence with a bijection
$f_p: \mathcal{Y}_{n_1} \rightarrow \mathcal{P}_{n_1}$. Hence, if we can show a bijection $f_s: \mathcal{P}_{n_1} \rightarrow \mathcal{S}_{n_1}$,
then we can compose $f_s$ with $f_p$ to yield a new function $f: \mathcal{Y}_{n_1} \rightarrow \mathcal{S}_{n_1}$; since the
composition of two bijections is bijective, then $f$ will be bijective.

We can construct such an $f_s$ as follows:
\[
f_s(p_1,\ldots,p_{n_1}) = (s_1,\ldots,s_{n_1})\quad\textrm{where}\ s_i = n - n_1 + i - p_i
\]
We must first show that $(s_1,\ldots,s_{n_1})=f_s(p_1,\ldots,p_{n_1}) \in \mathcal{S}_{n_1}$ for every $(p_1,\ldots,p_{n_1})\in\mathcal{P}_{n_1}$; in 
particular, we must show that $n_0\geq s_1\geq \ldots \geq  s_{n_1} \geq 0$.
Since every $p_i$ is an integer, we have that $p_{j} - p_i \geq j - i$
for every $j > i$. Hence:
\[
n - n_1 + p_{j} - p_i \geq n - n_1 + j - i
\]
We then add $i$ to and subtract $p_j$ from both sides to obtain:
\begin{eqnarray*}
n - n_1 + i - p_i &\geq& n - n_1 + j - p_{j}\\
s_i &\geq& s_{j}\quad\forall j>i
\end{eqnarray*}
The two boundary cases are $s_{n_1}$:
\[
s_{n_1}=n-n_1+n_1-p_{n_1}=n-p_{n_1}\geq 0
\]
and $s_1$:
\[
s_1=n-n_1+1-p_1=n_0 + 1-p_1 \leq n_0
\]
\begin{figure}
\begin{center}
\includegraphics[width=5in]{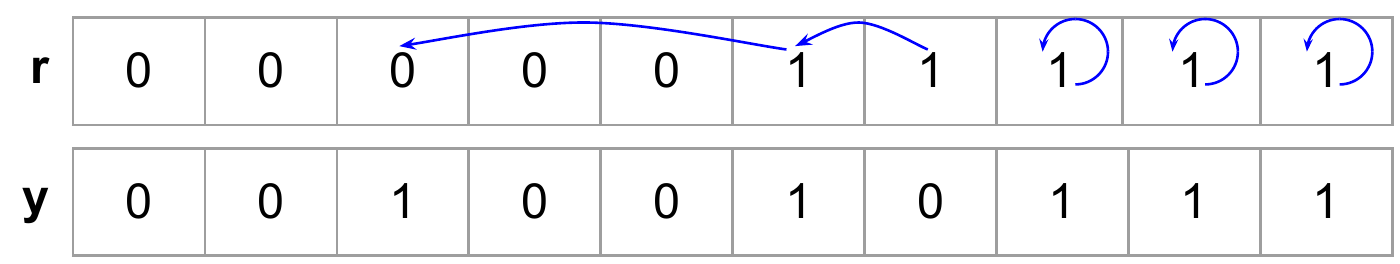}
\end{center}
\caption{Illustration of how any binary vector ${\bf y}$ with $n_1$ $1$s can be produced by repeatedly
left-swapping the $1$'s in a right-most binary vector ${\bf r}$. In the example above, left-swaps are indicated with
blue arrows, with $s_1=3,s_2=1$, and $s_3=s_4=s_5=0$.}
\label{fig:production_from_r}
\end{figure}

{\bf $f_s$ is 1-to-1}: Suppose $(s_1,\ldots,s_{n_1})=f_s(p_1,\ldots,p_{n_1})$ and
$(s_1',\ldots,s_{n_1}')=f_s(p_1',\ldots,p_{n_1}')$, and suppose $s_i=s_i'$ for each $i$. Then:
\begin{eqnarray}
n - n_1 + i - p_i &=& n - n_1 + i - p_i'\\
&\iff&\\
p_i &=& p_i'
\end{eqnarray}
for each $i$.

{\bf $f_s$ is onto}: For every $(s_1,\ldots,s_{n_1}) \in \mathcal{S}_{n_1}$, we can find
$(p_1,\ldots,p_{n_1})$ such that $f_s(p_1,\ldots,p_{n_1}) = (s_1,\ldots,s_{n_1})$ by setting
$p_i = n - n_1 + i - s_i$ for each $i$. It only remains to be shown that
$1\leq p_1 < \ldots < p_{n_1} \leq n$: For any $j>i$,
\[ p_j - p_i = j - i + s_i - s_j \]
Since $s_i \geq s_j$ (by definition of $\mathcal{S}_{n_1}$), we have:
\begin{eqnarray*}
p_j - p_i &\geq& j - i \\
          &>& 0
\end{eqnarray*}
and hence $p_j > p_i$. Moreover,
\begin{eqnarray*}
p_1 &=& n - n_1 + 1 - s_1 \\
    &\geq & n - n_1 + 1 - n_0 \\
    &\geq& 1
\end{eqnarray*}
and
\begin{eqnarray*}
p_{n_1} &=& n - n_1 + n_1 - s_{n_1} \\
	&\leq& n
\end{eqnarray*}
\end{proof}

\begin{theorem}
\label{prop:Y_partition_number}
Suppose a dataset contains $n$ examples, of which $n_1$ are labeled $1$ and $n_0$ are labeled $0$.
Let $\mathcal{Y}_{n_1}^{(d)} = \{ {\bf y}\in\{0,1\}^n: \sum_i y_i = n_1\ \wedge\ h({\bf y}, \hat{\bf y}) = d \}$, and
let $\mathcal{S}_{n_1}^{(d)} = \{ (s_1,\ldots,s_{n_1})\in \mathbb{Z}^{n_1}: n_0\geq s_1 \geq \ldots \geq s_{n_1} \geq 0\ \wedge\ \sum_i s_i = d \}$.
Then $\mathcal{Y}_{n_1}^{(d)}$ and $\mathcal{S}_{n_1}^{(d)}$ are in 1-to-1 correspondence.
\end{theorem}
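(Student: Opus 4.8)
The plan is to piggyback on the bijection $f: \mathcal{Y}_{n_1} \to \mathcal{S}_{n_1}$ already constructed in Proposition \ref{prop:Y_S_correspondence}, and to show that it carries the constraint $h({\bf y}, \hat{\bf y}) = d$ exactly onto the constraint $\sum_i s_i = d$. Since $\mathcal{Y}_{n_1}^{(d)}$ and $\mathcal{S}_{n_1}^{(d)}$ are precisely the subsets of $\mathcal{Y}_{n_1}$ and $\mathcal{S}_{n_1}$ cut out by these two extra conditions, it suffices to establish the single identity
\[
h({\bf y}, \hat{\bf y}) = \sum_{i=1}^{n_1} s_i,
\]
where $(s_1,\ldots,s_{n_1}) = f({\bf y})$ denotes the image of ${\bf y}$ under that bijection. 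Once this is in hand, the restriction of $f$ to $\mathcal{Y}_{n_1}^{(d)}$ is automatically injective, and the identity guarantees that its image is exactly $\mathcal{S}_{n_1}^{(d)}$; a restriction of a bijection to a subset is a bijection onto its image, so this yields the desired 1-to-1 correspondence.

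The heart of the argument is therefore the identity $h = \sum_i s_i$, and I would prove it by giving each coordinate $s_i = n - n_1 + i - p_i$ a direct combinatorial meaning, where $p_1 < \cdots < p_{n_1}$ are the positions of the $1$s in ${\bf y}$. Counting indices strictly greater than $p_i$ gives $n - p_i$ positions, of which exactly $n_1 - i$ hold a $1$ (namely $p_{i+1},\ldots,p_{n_1}$); hence the number of $0$s lying to the right of the $i$-th $1$ is $(n - p_i) - (n_1 - i) = n - n_1 + i - p_i = s_i$. Summing over $i$ therefore counts every pair consisting of a $1$ at a smaller index and a $0$ at a larger index, which is exactly $\sum_{i\in\mathcal{L}_0}\sum_{j\in\mathcal{L}_1}\mathbb{I}[i>j] = h({\bf y}, \hat{\bf y})$ by Eq.~\ref{eqn:h_simple}.

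An alternative route, matching the ``left-swap'' intuition stated before the definitions, is to argue inductively: begin from the right-most configuration ${\bf r}$, where $h({\bf r}, \hat{\bf y}) = 0$ by Proposition \ref{prop:h_zero}, and reach ${\bf y}$ by performing $s_i$ consecutive left-swaps on the $i$-th $1$; since each individual left-swap raises $h$ by exactly $1$ (Proposition \ref{prop:h_one_more}), the resulting vector satisfies $h = \sum_i s_i$. The main obstacle on this route is bookkeeping: one must check that the swaps can be scheduled so that every invoked swap is legal (each $1$ being moved has a $0$ immediately to its left at the moment of the swap), which is exactly what the ordering constraint $n_0 \geq s_1 \geq \cdots \geq s_{n_1} \geq 0$ guarantees. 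Because the direct counting argument sidesteps this scheduling issue entirely, I would present it as the primary proof and keep the left-swap picture only as motivation.
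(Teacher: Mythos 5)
Your proof is correct, and its core differs genuinely from the paper's. Both you and the paper use the same frame: restrict the bijection $f$ of Proposition~\ref{prop:Y_S_correspondence} and show that the constraint $h({\bf y},\hat{\bf y})=d$ corresponds exactly to $\sum_i s_i = d$, i.e., establish the identity $h({\bf y},\hat{\bf y}) = \sum_{i=1}^{n_1} s_i$. The difference is how that identity is proved. The paper takes precisely the route you relegate to motivation: it realizes ${\bf y}$ as the result of applying $s_i$ consecutive left-swaps to the $i$-th $1$ of the right-most configuration ${\bf r}$, carries out the scheduling argument you anticipated (the first $1$ is preceded by $n_0 \geq s_1$ zeros, and after its swaps the second $1$ is preceded by $s_1 \geq s_2$ zeros, and so on), and then invokes Propositions~\ref{prop:h_zero} and~\ref{prop:h_one_more} to conclude $h = \sum_i s_i$. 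Your primary argument instead reads $s_i = n - n_1 + i - p_i$ directly as the number of $0$s to the right of the $i$-th $1$, so that summing over $i$ yields $h$ via Eq.~\ref{eqn:h_simple}; this is correct, shorter, needs neither Proposition~\ref{prop:h_zero} nor~\ref{prop:h_one_more} nor any legality bookkeeping, and, because the identity is established for \emph{every} ${\bf y} \in \mathcal{Y}_{n_1}$ rather than only those with $h=d$, it delivers both containments (that $f$ maps $\mathcal{Y}_{n_1}^{(d)}$ into $\mathcal{S}_{n_1}^{(d)}$ and that every element of $\mathcal{S}_{n_1}^{(d)}$ is hit by some element of $\mathcal{Y}_{n_1}^{(d)}$) in one stroke -- a point the paper's write-up handles only implicitly, since it explicitly argues just the forward containment. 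What the paper's swap-based route buys in exchange is constructive content: the $\rho$-decomposition used in its proof is the same mechanism that powers the recursion relation and the enumeration algorithm of Section~\ref{sec:recursion}, so the paper needs that machinery regardless, whereas your counting argument, while cleaner for the theorem itself, does not by itself produce the generation procedure.
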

\begin{proof}
Since $\mathcal{Y}_{n_1}^{(d)} \subset \mathcal{Y}_{n_1}$ and $\mathcal{S}_{n_1}^{(d)} \subset \mathcal{S}_{n_1}$,
and since $\mathcal{Y}_{n_1}$ and $\mathcal{S}_{n_1}$ are in 1-to-1 correspondence with bijection $f$
(from Proposition \ref{prop:Y_S_correspondence})
then we must show only that the image of $\mathcal{Y}_{n_1}^{(d)}$ through $f$ is $\mathcal{S}_{n_1}^{(d)}$.
Suppose that $(s_1,\ldots,s_{n_1}) = f({\bf y})$ for some ${\bf y}\in \mathcal{Y}_{n_1}^{(d)}$.
Then we can write ${\bf y}$ as
\[
{\bf y} = \underbrace{\rho(\rho(\ldots(\rho}_{n_1}({\bf r}, n-n_1+1, s_1), n-n_1+2, s_2) \ldots), n, s_{n_1})
\]
In other words, ${\bf y} \in \mathcal{Y}_{n_1}^{(d)}$ can be obtained from ${\bf r}$ (a binary vector of length
$n$, such that $n_1$ elements are labeled $1$, in right-most configuration) by performing a sequence of consecutive
left-swaps on the $1$s in ${\bf r}$.
To see this, observe that the first $1$ in ${\bf r}$ is always immediately
preceded by $n_0$ $0$s; hence, we can perform $s_1\leq n_0$ consecutive left-swaps on ${\bf r}$ from
index $n-n_1+1$. (See Figure \ref{fig:production_from_r} for an illustration.)
Moreover, after performing these consecutive left-swaps,
then the second $1$ in ${\bf r}$  will be immediately preceded by $s_1$ $0$s;
hence, we can perform $s_2\leq s_1$ consecutive left-swaps on ${\bf r}$ from
index $n-n_1+2$. After performing these consecutive
left-swaps, then the third $1$ in ${\bf r}$ will be immediately preceded by $s_2$ $0$s;
and so on. After performing the consecutive left-swaps for each of the $1$s in ${\bf r}$,
then the position of the $i$th $1$ in the resulting
vector is $n - n_1 + i - s_i = p_i$ for each $i$, as desired. 

Next, recall that,
by Proposition \ref{prop:h_zero}, $h({\bf r}, \hat{\bf y})=0$.  Moreover,
by Proposition \ref{prop:h_one_more}, each left-swap increases the value of $h$ by $1$; hence, applying
$\rho$ to perform $s_i$ consecutive left-swaps increases the value of $h$ by $s_i$. Summing over all
$1$s results in a total of $\sum_{i=1}^{n_1} s_i$ misclassified pairs, i.e., $h({\bf y}, \hat{\bf y}) = \sum_{i=1}^{n_1} s_i$.
But since ${\bf y} \in \mathcal{Y}_{n_1}^{(d)}$, we already know that $h({\bf y}, \hat{\bf y}) = d$. Therefore,
$\sum_i s_i = d$, and hence $(s_1,\ldots,s_{n_1})\in\mathcal{Y}_{n_1}^{(d)}$.

\end{proof}
Interestingly, the set $\mathcal{S}_{n_1}^{(d)}$ is a discrete $n_1$-dimensional simplex
$\Delta^{n_1}_{d}$ that has been truncated by the additional constraint that $n_0 \geq s_1 \geq \ldots \geq s_{n_1}$.

\subsection{Summing over all possible $n_1$}
Based on Theorem \ref{prop:Y_partition_number}, we can compute the number, $v(n_0, n_1, d)$, of binary vectors of length $n=n_0+n_1$,
such that $n_1$ of the entries are labeled $1$ and for which $h({\bf y}, \hat{\bf y})=d$.
Recall that the AUC can be computed by dividing the number $d$ of misclassified pairs by the total number of example-pairs
$n_0 n_1$. Hence, to compute the \emph{total} number, $w(n,c)$, of binary vectors of length $n$ 
for which $\textrm{AUC}({\bf y}, \hat{\bf y})=c$, we must first determine the set $\mathcal{N}_1$ of possible values for $n_1$,
and then sum $v(n_0,n_1,d)$ over every value in $\mathcal{N}_1$ and the corresponding value $d$.

Suppose that the oracle reports an AUC of $c=p/q$, where $p/q$ is a reduced fraction, Since $c$ 
represents the fraction of all pairs of examples -- one from each class -- that are classified  by the contestant's
guesses correctly, then $q$ must divide the total number ($n_0 n_1$) of pairs
in the test set. Hence:
\[
\mathcal{N}_1 = \{ n_1: (0<n_1<n)\ \wedge\ (q\ |\ (n - n_1) n_1) \}
\]
Since it is possible that $q < n_0 n_1$, we must scale $(q-p)$ by $n_0 n_1 / q$ to determine the actual number of 
misclassified pairs $d$. In particular, we define
\[
d(n_1) = (q-p) n_0 n_1 / q = (q - p) (n - n_1) n_1 / q
\]

Based on $\mathcal{N}_1$ and $m$, we can finally compute:
\[
w(n,c) = \left| \bigcup_{n_1 \in \mathcal{N}_1} \mathcal{S}_{n_1}^{(d(n_1))} \right|
       = \sum_{n_1 \in \mathcal{N}_1} v(n-n_1, n_1, d(n_1))\quad \textrm{since the $\mathcal{S}_{n_1}^{(d(n_1))}$ are disjoint.}
\]

\section{Recursion Relation}
\label{sec:recursion}
We can derive a recursion relation for $v(n_0, n_1, d)$ as follows:
Given any binary vector ${\bf r}$ of length $n$, with $n_1$ $1$s, in right-most configuration,
we can apply $k \in \{0, 1, \ldots, \min(d,n_0)\}$ left-swaps on ${\bf r}$ from index $n-n_1+1$ (i.e., from
the left-most $1$) to yield ${\bf y} = \rho({\bf r}, n-n_1+1, k)$.
Then the vector $(y_{n-n_1-k+2}, y_{n-n_1-k+3}, y_{n-n_1-k+4}, \ldots, y_n)$ (i.e., the last $n_1-1+k$ elements of ${\bf y}$)
consists of $k$ $0$s followed by $(n_1-1)$ $1$s; in other words, it is in right-most configuration.
Thus, by iterating over all possible $k$ and computing for each choice how many \emph{more} left-swaps are necessary to reach
a total of $d$, we can define $v$ recursively:
\[
v(n_0, n_1, d) = \sum_{k=0}^{\min(d,n_0)} v(k, n_1-1, d-k)
\]
with initial conditions:
\begin{eqnarray*}
v(n_0, n_1, 0) &=& 1\quad\forall n_0\geq 0, n_1\geq 0\\
v(0, n_1, d) &=& 0\quad\forall n_1\geq 0, d>0\\
v(n_0, 0, d) &=& 0\quad\forall n_0\geq 0, d>0\\
\end{eqnarray*}
Dynamic programming using a three-dimensional memoization table can be used to compute $v$
in time $O(n_0 n_1 d)$. Moreover, the recursive algorithm above can also be used \emph{constructively} (though
with large space costs)
to compute the set of all binary vectors ${\bf y}$ of length $n$, of which $n_1$ are $1$,
such that $h({\bf y}, \hat{\bf y})=d$ for any $d$; conceivably, this could be useful for
performing some kind of attack that uses the set of all compatible binary ground-truth vectors to improve
the contestant's accuracy \citep{Whitehill2016}. In order to apply this construction, the test examples
must first be sorted in increasing value of the contestant's guesses; the constructive algorithm is then applied to
generate all possible ${\bf y}$; and then the components of each of the possible binary vectors must be reordered
to recover the original order of the test examples.

\section{Growth of $w(n,c)$ in $n$ for fixed $c$}
\cite{Whitehill2016} showed that, for every fixed rational $c=p/q \in (0,1)$, the number of possible
binary ground-truth vectors for which the contestant's guesses achieve AUC of exactly $c$, grows
exponentially in $n$. However, their result applies only to datasets that are at least
$n=4q$ in size.  What can happen for smaller $n$?

Using the recursive formula from Section \ref{sec:recursion}, we found empirical evidence that
$w(n,c)$ may actually be (initially) monotonically \emph{decreasing} in $n$, until $n$ reaches a threshold (specific
to $q$) at which it begins to increase again. As an example with $p=1387$ and $q=1440$ (and hence
$d=1440-1387=53$), we can
compute the number of possible binary labelings that are compatible
with an AUC of exactly $c=p/q=1387/1440$ (which is  approximately $96.3\%$) as a function of
$n$:

\begin{center}
\begin{tabular}{c|c|c}
$n$ & $\mathcal{N}_1$ & $w(n,c)$\\\hline
$76$ & $\{ 36, 40 \}$ & $657488$ \\
$77$ & $\{ 32, 45 \}$ & $654344$ \\
$78$ & $\{ 30, 48 \}$ & $650822$ \\
$84$ & $\{ 24, 60 \}$ & $622952$ \\
$92$ & $\{ 20, 72 \}$ & $572728$ \\
$98$ & $\{ 18, 80 \}$ & $529382$ \\
$106$ & $\{ 16, 90 \}$ & $468686$ \\
\end{tabular}
\end{center}

Here, $w(n,c)$ decreases steadily until $n=106$.
We conjecture  that $w(n,c)$ is monotonically non-increasing
in $n$ for $n\leq \min\{ n_0+n_1: n_0 n_1 = 2q \}$, for every fixed $c$.

While the number of satisfying solutions in this example for $n=106$ is still in the hundreds of thousands,  it is 
easily small enough to allow each possibility to be considered individually, e.g., as part of
some algorithmic attack to maximize performance within a datamining competition \citep{Whitehill2016}.
Furthermore, we note that test sets on the order of hundreds of examples are not uncommon -- 
the 2017 Intel \& MobileODT Cervical Cancer Screening is one example.

\section{Summary \& Future Work}
We have investigated the mathematical structure of how the Area Under the Receiver Operating Characteristics Curve
(AUC) accuracy metric is computed from the binary vector of ground-truth labels and a real-valued vector
of guesses. In particular, we derived an efficient recursive algorithm with which to count the exact number
of binary vectors for which the AUC of a fixed vector of guesses is some value $c$; we also derived a constructive
algorithm with which to enumerate all such binary vectors.

In future work it would be interesting to examine whether
and how knowledge of the possible ground-truth labelings could be exploited to improve an existing vector of guesses;
a simple mechanism was proposed by \cite{Whitehill2016}, but it is practical only  for tiny datasets. In addition,
it would be useful to explore how \emph{multiple} subsequent oracle queries  might be used to prune
the set of possible ground-truth labelings more rapidly.

\bibliographystyle{aaai}
\bibliography{paper}

\begin{thebibliography}{}

\bibitem[\protect\citeauthoryear{Agarwal \bgroup et al\mbox.\egroup
  }{2005}]{AgarwalEtAl2005}
Agarwal, S.; Graepel, T.; Herbrich, R.; Har-Peled, S.; and Roth, D.
\newblock 2005.
\newblock Generalization bounds for the area under the {ROC} curve.
\newblock In {\em Journal of Machine Learning Research},  393--425.

\bibitem[\protect\citeauthoryear{Blum and Hardt}{2015}]{blum2015ladder}
Blum, A., and Hardt, M.
\newblock 2015.
\newblock The ladder: A reliable leaderboard for machine learning competitions.
\newblock {\em arXiv preprint arXiv:1502.04585}.

\bibitem[\protect\citeauthoryear{Blum, Ligett, and Roth}{2013}]{BlumEtAl2013}
Blum, A.; Ligett, K.; and Roth, A.
\newblock 2013.
\newblock A learning theory approach to noninteractive database privacy.
\newblock {\em Journal of the ACM (JACM)} 60(2):12.

\bibitem[\protect\citeauthoryear{Chaudhuri and
  Monteleoni}{2009}]{ChaudhuriMonteleoni2009}
Chaudhuri, K., and Monteleoni, C.
\newblock 2009.
\newblock Privacy-preserving logistic regression.
\newblock In {\em Advances in Neural Information Processing Systems},
  289--296.

\bibitem[\protect\citeauthoryear{Dwork \bgroup et al\mbox.\egroup
  }{2015}]{dwork2015preserving}
Dwork, C.; Feldman, V.; Hardt, M.; Pitassi, T.; Reingold, O.; and Roth, A.~L.
\newblock 2015.
\newblock Preserving statistical validity in adaptive data analysis.
\newblock In {\em Proceedings of the Forty-Seventh Annual ACM on Symposium on
  Theory of Computing},  117--126.
\newblock ACM.

\bibitem[\protect\citeauthoryear{Dwork}{2011}]{Dwork2011}
Dwork, C.
\newblock 2011.
\newblock Differential privacy.
\newblock In van Tilborg, H., and Jajodia, S., eds., {\em Encyclopedia of
  Cryptography and Security}. Springer US.
\newblock  338--340.

\bibitem[\protect\citeauthoryear{Hardt and Ullman}{2014}]{hardt2014preventing}
Hardt, M., and Ullman, J.
\newblock 2014.
\newblock Preventing false discovery in interactive data analysis is hard.
\newblock In {\em Foundations of Computer Science (FOCS), 2014 IEEE 55th Annual
  Symposium on},  454--463.
\newblock IEEE.

\bibitem[\protect\citeauthoryear{Matthews and Harel}{2013}]{Matthews2013}
Matthews, G.~J., and Harel, O.
\newblock 2013.
\newblock An examination of data confidentiality and disclosure issues related
  to publication of empirical roc curves.
\newblock {\em Academic radiology} 20(7):889--896.

\bibitem[\protect\citeauthoryear{Stoddard, Chen, and
  Machanavajjhala}{2014}]{StoddardEtAl2014}
Stoddard, B.; Chen, Y.; and Machanavajjhala, A.
\newblock 2014.
\newblock Differentially private algorithms for empirical machine learning.
\newblock {\em CoRR} abs/1411.5428.

\bibitem[\protect\citeauthoryear{Tyler and Chen}{2000}]{Tyler2000}
Tyler, C., and Chen, C.-C.
\newblock 2000.
\newblock Signal detection theory in the 2{AFC} paradigm: attention, channel
  uncertainty and probability summation.
\newblock {\em Vision Research} 40(22):3121--3144.

\bibitem[\protect\citeauthoryear{Whitehill}{2016}]{Whitehill2016}
Whitehill, J.
\newblock 2016.
\newblock Exploiting an oracle that reports {AUC} scores in machine learning
  contests.
\newblock In {\em AAAI},  1345--1351.

\bibitem[\protect\citeauthoryear{Zheng}{2015}]{zheng2015toward}
Zheng, W.
\newblock 2015.
\newblock Toward a better understanding of leaderboard.
\newblock {\em arXiv preprint arXiv:1510.03349}.

\end{thebibliography}

\end{document}